\newcommand{\reals}{{\mbox{\bf R}}}
\newcommand{\Prob}{\mathop{\bf Prob}}
\newcommand*{\defeq}{\mathrel{\vcenter{\baselineskip0.5ex \lineskiplimit0pt
                     \hbox{\scriptsize.}\hbox{\scriptsize.}}}%
                     =}
\newtheorem{theorem}{Theorem}%[section]
\newtheorem{lemma}{Lemma}%[theorem]
\title{DJAM: distributed Jacobi asynchronous method for learning personal models}
\author[1,2]{Inês Almeida}
\author[1,2]{João Xavier, \textit{Member}, \textit{IEEE} 
	\thanks{This work was partially supported by Funda\c{c}\~{a}o para a Ci\^{e}ncia e Tecnologia (FCT), Portugal, under Project UID/EEA/50009/2013, and by grant PD/BD/135012/2017 from FCT. The authors would also like to express their gratitude to Dušan Jakovetić for his valuable input. Emails:
	{\tt\footnotesize almeida.inesb@gmail.com, jxavier@isr.ist.utl.pt}
	}
}
\affil[1]{Instituto Superior Técnico, Universidade de Lisboa, Lisbon, Portugal}
\affil[2]{Institute for Systems and Robotics (ISR), LARSyS, Lisbon, Portugal}
\begin{document}
%--------------------------------------%
\maketitle 

\begin{abstract}
%Abstract between 150 and 250 words: This has 174.
Processing data collected by a network of agents often boils down to solving an optimization problem. The distributed nature of these problems calls for methods that are, themselves, distributed. While most collaborative learning problems require agents to reach a common (or consensus) model, there are situations in which the consensus solution may not be optimal. For instance, agents may want to reach a compromise between agreeing with their neighbors and minimizing a personal loss function.
We present DJAM, a Jacobi-like distributed algorithm for learning personalized models. This method is implementation-friendly: it has no hyperparameters that need tuning, it is asynchronous, and its updates only require single-neighbor interactions. We prove that DJAM converges with probability one to the solution, provided that the personal loss functions are strongly convex and have Lipschitz gradient. We then give evidence that DJAM is on par with state-of-the-art methods: our method reaches a solution with error similar to the error of a carefully tuned ADMM in about the same number of single-neighbor interactions.
\end{abstract}
%--------------------------------------%
% !TEX root = main.tex

%--------------------------------------%
\section{Learning personal models} \label{sec:problem-statement}
Consider  $n$ agents, each with a personal loss function:  $f_i:\reals^p\rightarrow\reals$,  $\theta_i \mapsto f_i( \theta_i )$, for agent $i = 1, \ldots, n$. For example, $f_i(\theta_i)$ could be the loss of a model parameterized by $\theta_i$ on agent $i$'s personal dataset.  The agents are the nodes of an undirected, connected network.
% retirei o similar goals porque é explicado melhor já a seguir (e fazia confusão).

Each agent aims to find a model that minimizes both the mismatch with its neighbors' models and its personal loss.  More specifically, agents aim to solve
\begin{equation} \label{prob:model-prop}
\min_{\theta_1,\ldots,\theta_n}\hspace{.4cm}  
\frac{1}{2}\sum_{i<j}^n W_{ij}\|\theta_i-\theta_j\|^2
+\sum_{i=1}^n f_i(\theta_i),
\end{equation}
where $W = \left( W_{ij} \right) \in\reals^{n\times n}$ is a symmetric matrix that mirrors the topology of the network: $W_{ij}~\geq~0$ if agents $i$ and $j$ are connected in the network; $W_{ij}=0$ otherwise. 
The weight $W_{ij}$ controls the degree of agreement we want between agents $i$ and $j$: a larger $W_{ij}$ enforces more similarity between the corresponding agents' models.

Problem \eqref{prob:model-prop} can model a number of applications, including peer-network recomender systems, distributed (linear) classification~\cite{vanhaese2016personalized}, opinion propagation, and field estimation~\cite{eksin2012heuristic}; the latter is discussed in Section \ref{sec:exp}. The weights $W_{ij}$ may be chosen, for instance, based on the spatial distance between pairs of agents, or according to the similarity of their personal datasets.

\vspace*{2ex} 

\noindent\textbf{Closest related works.} Optimization problem~\eqref{prob:model-prop} has been addressed in~\cite{vanhaese2016personalized}. For convex loss functions $f_i$ that are quadratic, the authors suggest a distributed algorithm, which we refer to as Model Propagation Algorithm (MPA). In each round of MPA, an agent wakes up at random,  interacts with one of its neighbours, and both go back to sleep; the pattern repeats for the following rounds. MPA is an algorithm that is easy to implement because it is asynchronous (each agent has its own clock to wake up), has no parameter to tune,  and involves only single-neighbour interactions (the agent that wakes up does not need to coordinate message-passing with several neighbours). The authors in~\cite{vanhaese2016personalized} prove that MPA converges to the solution of~\eqref{prob:model-prop} in expectation (mean-value), for quadratic loss functions; for these functions, the iterations of the method we propose coincide with those of MPA.
For more general loss functions, those authors suggest a different algorithm, based on ADMM, which needs parameter tuning to reach optimal performance. This ADMM-based algorithm for collaborative learning (CL-ADMM), will be compared with our algorithm in Section~\ref{sec:exp}.

Problem~\eqref{prob:model-prop}, with the same kind of asynchronous single-neighbour interactions, can also be tackled by the algorithm proposed in~\cite{eksin2012heuristic}. In the language of~\cite{eksin2012heuristic}, this corresponds to having agents deviate from the ``rational" decision at each round (the rational decision would require each agent to interact will \textit{all} its neighbors). For such ``irrational" decisions, the authors show that, with probability one, the iterations of their algorithm will visit infinitely often a neighborhood of the solution of~\eqref{prob:model-prop}, although the iterations may continually escape that neighborhood.
Finally, a recent follow-up on \cite{vanhaese2016personalized} is~\cite{bellet2018personalized}, where a block coordinate descent method with broadcast communications is used to solve problem~\eqref{prob:model-prop}.

\vspace*{2ex}

\noindent\textbf{Contributions.} We show that a simple Jacobi-like distributed algorithm, which we call DJAM, can solve~\eqref{prob:model-prop} with the same kind of asynchronous single-neighbor interactions. DJAM, which can also be seen as a randomized block-coordinate method, has no parameters that need tuning. For continuously differentiable personal loss functions that are strongly convex and have Lipschitz gradient, that is, such that, for all $i$,
\begin{equation} (\nabla f_i(x) - \nabla f_i(y))^T (x-y) \geq m_i\|x-y\|^2 \label{assumpf1} \end{equation}
for some $m_i > 0$ and all $x,y$, and
\begin{equation} \|\nabla f_i(x)-\nabla f_i(y)\|\leq M_i\|x-y\| \label{assumpf2} \end{equation}
for some $M_i \geq 0$ and all $x, y$, we show that DJAM converges to the solution of~\eqref{prob:model-prop} with probability one. The values of $m_i$ and $M_i$ are used for proving convergence but need not be known when implementing DJAM.

DJAM improves on MPA not only because it applies to a larger class of functions than quadratics, but also because it converges in a stronger sense: as the proof of Theorem~\ref{theo:as-convergence} ahead shows, the DJAM iterations are uniformly bounded; thus, the convergence in expectation in~\cite{vanhaese2016personalized} follows by the dominated convergence theorem from our convergence with probability one. Our result only applies to a (somewhat) more restricted class of functions than the one of \cite{eksin2012heuristic}, but our convergence mode is stronger than the one of~\cite{eksin2012heuristic}.
Also, unlike in \cite{bellet2018personalized}, our method does not require knowing the values of $M_i$ upon implementation.

%Convergence with probability one is also stronger than the convergence mode in~\cite{eksin2012heuristic}, which guarantees only continual visits to a neighborhood of the solution, for the class of functions we consider.

\vspace*{2ex}

\noindent\textbf{Other related work.} Although~\cite{vanhaese2016personalized,eksin2012heuristic} are the closest works that we are aware of, many other distributed algorithms solve variations of problem~\eqref{prob:model-prop}. We now mention some representative work. 

A number of distributed algorithms allow agents to solve an underlying optimization problem by reaching \textit{consensus} on the solution. They use techniques ranging from distributed (sub)gradient descent~\cite{kvaternik2011lyapunov}, \cite{nedic2009dsg} to more elaborate techniques such as EXTRA~\cite{shi2015extra}, distributed ADMM~\cite{boyd2011admm,wei2013asyncadmm}, dual averaging~\cite{colin2016dualavg}, and distributed Augmented Lagrangean (AL) \cite{jakovetic2015linconv}.  Some algorithms aim at more specific optimization tasks such as distributed lasso regression \cite{mateos2010distlinreg}, distributed SVMs~\cite{forero2010distsvms}, and distributed RFVL networks \cite{scardapane2015rvfl}.
All of these methods aim at reaching consensus solutions---all agents converge to the \textit{same} value. Conversely, in problem~\eqref{prob:model-prop}, agents want to find \textit{different} (personalized) values.
%This group of distributed methods make all agents reach consensus on the solution---agents converge to the \textit{same} value. Instead, in problem~\eqref{prob:model-prop} agents want to find \textit{different} (personal) values.

The related problem of network lasso is dealt with in \cite{boyd2015networklasso}; however, the cost in \cite{boyd2015networklasso} puts a strong emphasis on neighbouring models being exactly equal, whereas in our case we want them to be similar, but not necessarily equal. The methods proposed in \cite{facchinei2015nonconvex} and \cite{necoara2016parallelcd} can tackle more general problems, but both require that agents communicate with all their neighbors before updating, while our method needs only communications between two agents at a time.
Problem \eqref{prob:model-prop} is also referred to as multitask problem; this problem is solved in \cite{sayed2014multitask} for a more restricted class of personal losses than ours.

%% {\color{red}We can write more stuff on this matter here if we need. No more than the space already occupied by this red text. This is just placeholder text. This is just placeholder text. This is just placeholder text.}

%--------------------------------------%
%--------------------------------------%
%--------------------------------------%
\section{DJAM} \label{sec:algorithm}
A naive Jacobi-like approach to solve~\eqref{prob:model-prop} would work as follows: at each round $t$, one agent $i$, picked at random, would update its model according to
\[
\theta_i(t+1) = \arg\min_{\theta_i} \frac{1}{2}\sum_{k\in\mathcal{N}_i} W_{ik}\|\theta_i-\theta_k(t)\|^2
+ f_i(\theta_i),
\]
where ${\mathcal N}_i$ is the set of neighbors of agent~$i$.
This naive approach, however, has a major drawback: it requires that agent $i$ communicates with \textit{all} its neighbors---to receive their up-to-date models $\theta_k(t)$---before updating its own model. Coordinating such message-passing, at each round, is cumbersome. A lighter scheme, involving only a single pair of agents at a time, is simpler to implement in practice, and requires fewer communications, at the expense of slowing down convergence. 

The key idea, which we borrow from~\cite{vanhaese2016personalized}, is to have each agent $i$ keep its own model $\Theta_i^i$ as well as (often outdated) versions of its neighbors' models, $\Theta_i^k$ for $k\in\mathcal{N}_i$. The versions of each pair of neighbors are updated whenever they communicate with each other. 
More specifically, at each round $t$, agent $i$ wakes up and chooses a neighbor $j\in\mathcal{N}_i$ to communicate with. They begin by exchanging information on their models, meaning that $\Theta_i^j(t+1)=\Theta_j^j(t)$ and $\Theta_j^i(t+1) = \Theta_i^i(t)$. All other variables remain unchanged. Afterwards, both agents update their own model via
\begin{equation} \label{eq:update}
\Theta_l^l(t+1) = \arg\min_{\theta_l} \frac{1}{2}\sum_{k\in\mathcal{N}_l} W_{lk} \|\theta_l - \Theta_l^k(t+1)\|^2 + f_l(\theta_l)
\end{equation}
for $l\in\{i,j\}$.

For the purpose of analyzing DJAM, we merge these two steps into a single one. Since the personal model $\Theta_i^i$ can be created at any time at agent $i$ via \eqref{eq:update}, it need not be stored. This means that, at round $t$ of DJAM, two neighboring agents $i$ and $j$ will \emph{compute and share} their own models with each other:
\begin{equation} \label{eq:iteration}
\Theta_i^j(t+1) = \arg\min_{\theta_j} \frac{1}{2}\sum_{k\in\mathcal{N}_j} W_{jk} \|\theta_j - \Theta_j^k(t)\|^2 + f_j(\theta_j),
\end{equation}
and similarly for $\Theta_j^i$. Mind that the right-hand side of \eqref{eq:iteration} is computed by agent $j$ and sent to agent $i$, who stores the result in the variable on the left-hand side of \eqref{eq:iteration}.\footnote{Updates \eqref{eq:update} and \eqref{eq:iteration} are equivalent apart from a minor technicality: The values of $\Theta_i^j$ are equal for all $t$, while those of the $\Theta_i^i$ may be (finitely) delayed from one implementation to the other. This detail does not affect the validity of our results.}

% !TEX root = main.tex

%--------------------------------------%
%--------------------------------------%
%--------------------------------------%
\section{Proof of convergence for DJAM} \label{sec:proof}
We now prove that DJAM, the algorithm with updates given by \eqref{eq:iteration}, converges with probability one to the solution of~\eqref{prob:model-prop}. We omit some laborious (but otherwise painless) technical steps that would make the notation and proofs too lengthy.

%${\mathcal E} = \left\{ (i, j)\,:\, i < j, i \sim j \right\}$
Let ${\mathcal E}$ be the set of edges of the network that links the agents. The network need not be fully connected: each agent is connected only to a subset of the remaining agents. We assume that at each round \textbf{(A1)} one edge of ${\mathcal E}$ is chosen at random, independently of previous choices; and \textbf{(A2)} each edge in $\mathcal{E}$ has a fixed, positive probability of being chosen. It is easy to verify that, under assumptions \textbf{(A1)} and \textbf{(A2),} each edge in~$\mathcal{E}$ is chosen infinitely often with probability one.

% We stack some supporting lemmas (with abridged proofs) before reaching the main result, Theorem~\ref{theo:as-convergence}. We begin with a simple property of the edge selection process.

%\begin{lemma}
%Under assumptions (A1) and (A2), each edge in~${\mathcal E}$ is chosen infinitely often.
%\label{theo:ij-io}
%\end{lemma}
%
%\begin{proof}
%This follows from the divergent part of the Borel-Cantelli Lemma \cite[Theorem~4.2.4]{chung2001course}.
%\end{proof}

%\begin{proof}
%We begin by showing that any given pair of linked agents will be selected i.o. with probability one. Given a pair $(i,j)\in\mathcal{E}$, define the sequence of independent random variables $Y_{(i,j)}(t)\defeq \indic_{\{(i,j)\}}(X(t))$. Since $p_{ij}>0$ by hypothesis,
%\begin{align*}
%\sum_{t=1}^\infty \Prob\left(Y_{(i,j)}(t) = 1\right) &= \sum_{t=1}^\infty p_{ij}\\
%&= \infty.
%\end{align*}
%It follows by the divergent part of the Borel-Cantelli lemma \cite{chung2001course} that
%\[
%\Prob\left(Y_{(i,j)}(t)=1\ \mbox{i.o.}\right)=1.
%\]
%But $Y_{(i,j)}(t)=1$ if and only if edge $(i,j)$ is selected at time $t$. It follows that edge $(i,j)$ will be selected infinitely often with probability one.
%
%Finally, we use the fact that a countable intersection of events with probability one has, itself, probability one \cite[Proposition~2.4~a)]{williams} to conclude all edges are selected i.o. with probability one.
%\end{proof}

The number of times a given edge $(i,j)$ is chosen between rounds $s$ and $t$ (with $s \leq t$) is a random variable defined as $S_{(i,j)}(s,t) := \sum_{\tau = s}^{t} Y_{(i,j)}(\tau)$,
where $Y_{(i,j)}(\tau)=1$ if edge $(i,j)$ is chosen at round $\tau$, and zero otherwise. 
We now define a useful family of stopping times $\left( T_m \right)_{m \geq 0}$. We let $T_0 := 0$ and, for $m \geq 0$,
\[
T_{m+1}\defeq \min \left\{ t \mid S_{(i,j)}(T_m + 1,t)\geq 1, \forall (i,j)\in\mathcal{E} \right\}.
\]
In words, $T_{m+1}$ is the first round after $T_m$  by which all edges have been chosen at least once. Assumptions \textbf{(A1)} and \textbf{(A2)} imply that any $T_m$ is finite for any $m$. Addtionally, $T_m\rightarrow\infty$ as $m\rightarrow\infty$ with probability one.

% Also, since $T_{m+1} \geq T_m + 1$ for any $m$, one finds that $T_m\rightarrow\infty$ as $m\rightarrow\infty$ with probability one.}

%Our next lemma states an intuitive property of $T_m$: the sequence $(T_m )_{m \geq 0}$ grows unbounded.
%
%\begin{lemma}
%Under assumptions (A1) and (A2), $T_m\rightarrow\infty$ as $m\rightarrow\infty$, with probability one.
%\label{theo:Tm-infinity}
%\end{lemma}
%
%\begin{proof}
%$T_{m+1} \geq T_m + 1$ for any $m$.
%\end{proof}
We first state an important consequence of assumptions~\eqref{assumpf1} and~\eqref{assumpf2} on each personal loss function $f_j$.

%\begin{proof}
%Choose any sequence of edges in $\mathcal{E}$. By construction, $T_{m+1} \geq T_m$, which means $T_m$ either converges or goes to infinity. Suppose it converges to $T<\infty$. Then, for all $\epsilon>0$, there exists some $m_0$ such that, for all $m\geq m_0$,
%\[
%\|T_m - T\| \leq \epsilon.
%\]
%By choosing $\epsilon<1$, we conclude all $T_m$ coincide on the value $T$ for all $m$ greater than some finite value. But, since only one pair of agents communicates at each time step, we must have $T_{m+1}\geq T_m + 1$ (to be precise, $T_{m+1} \geq T_m + \vert\mathcal{E}\vert$). This contradiction implies that $T_m$ diverges.
%\end{proof}

\begin{lemma} \label{theo:shrinking}
Let $w_j \defeq \sum_{k\in\mathcal{N}_j} W_{jk}$, and take the function
$F_j(x) = f_j(x) + \frac{1}{2}w_j\|x\|^2$. Note that $\nabla F_j$ is a bijective map (with inverse map $(\nabla F_j)^{-1}$) because, from standard convex theory, $\nabla f_j$ is. Then, for any $a$ and $b$,
\[
\|(\nabla F_j)^{-1}(a) - (\nabla F_j)^{-1}(b)\| \leq \left(m_j + w_j\right)^{-1}\|a - b\|.
\]
\end{lemma}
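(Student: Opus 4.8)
The plan is to recognize the stated inequality as nothing more than the Lipschitz continuity of the inverse of a strongly monotone gradient map, with Lipschitz constant equal to the reciprocal of the monotonicity modulus. Concretely, I would first show that $\nabla F_j$ is strongly monotone with modulus $m_j + w_j$, and then invert that relation via the Cauchy--Schwarz inequality.

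First I would write $\nabla F_j(x) = \nabla f_j(x) + w_j x$, which is well defined since $f_j$ is continuously differentiable. Adding $w_j\|x-y\|^2 \geq 0$ to both sides of the strong-convexity assumption~\eqref{assumpf1} gives, for all $x,y$,
\[
(\nabla F_j(x) - \nabla F_j(y))^T(x-y) = (\nabla f_j(x) - \nabla f_j(y))^T(x-y) + w_j\|x-y\|^2 \geq (m_j + w_j)\|x-y\|^2 .
\]
Then, given $a$ and $b$, I would set $x \defeq (\nabla F_j)^{-1}(a)$ and $y \defeq (\nabla F_j)^{-1}(b)$, so that $a = \nabla F_j(x)$ and $b = \nabla F_j(y)$; these are well defined because $\nabla F_j$ is a bijection, as noted in the statement. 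Substituting into the displayed inequality and bounding the left-hand side by Cauchy--Schwarz,
\[
\|a-b\|\,\|x-y\| \geq (a-b)^T(x-y) \geq (m_j+w_j)\|x-y\|^2 .
\]
If $x=y$ the claimed bound holds trivially; otherwise I divide by $(m_j+w_j)\|x-y\| > 0$ to get $\|x-y\| \leq (m_j+w_j)^{-1}\|a-b\|$, which is exactly the assertion.

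I do not anticipate a genuine obstacle: the only points needing a moment's care are the degenerate case $a=b$ and the positivity of $m_j+w_j$ (immediate from $m_j>0$ and $w_j\geq 0$). It is worth remarking that the Lipschitz-gradient hypothesis~\eqref{assumpf2} plays no role here --- only strong convexity~\eqref{assumpf1} is used --- and that the term $\frac{1}{2}w_j\|x\|^2$ added to form $F_j$ is chosen precisely so that the minimizer of the per-agent subproblem in~\eqref{eq:iteration} equals $(\nabla F_j)^{-1}$ evaluated at the weighted sum $\sum_{k\in\mathcal{N}_j} W_{jk}\Theta_j^k$ of the stored neighbor models; Lemma~\ref{theo:shrinking} thus quantifies how contractively that minimizer depends on those stored values, which is what feeds into the convergence argument for Theorem~\ref{theo:as-convergence}.
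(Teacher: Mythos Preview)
Your proof is correct and follows essentially the same route as the paper's: both establish the strong monotonicity of $\nabla F_j$ with modulus $m_j+w_j$ (via the identity $\nabla F_j(x)-\nabla F_j(y)=(\nabla f_j(x)-\nabla f_j(y))+w_j(x-y)$ and assumption~\eqref{assumpf1}), then apply Cauchy--Schwarz to the substituted points $x=(\nabla F_j)^{-1}(a)$, $y=(\nabla F_j)^{-1}(b)$. Your version is slightly more explicit about the degenerate case $x=y$ and correctly observes that assumption~\eqref{assumpf2} is not used here.
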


%\begin{lemma} \label{theo:shrinking}
%Consider the function $F_j(x) = f_j(x) + \frac{1}{2}\sum_{k\in\mathcal{N}_j} W_{jk}\|x\|^2$.
%Then, for any $a$ and $b$, 
%\[
%\|(\nabla F_j)^{-1}(a) - (\nabla F_j)^{-1}(b)\|
%\leq
%\left(m_j + w_j\right)^{-1}\|a - b\|
%\]
%where $w_j \defeq \sum_k W_{jk}$.
%\end{lemma}
%\begin{proof}
%This is proved using the fact that
%\[ (x-y)^\top (\nabla f_j(x) - \nabla f_j(y)) \geq m_j \|x-y\|^2 \]
%followed by the Cauchy-Schwartz inequality.
%\end{proof}

\begin{proof}
Choose $x = (\nabla F_j)^{-1}(a)$ and $y =(\nabla F_j)^{-1}(b)$. Clearly, $a - b = (\nabla f_j(x) - \nabla f_j(y)) + w_j (x-y)$. Multiplying both sides of this equality by $(x-y)^T$ yields $(x-y)^T (a - b) \geq (m_j + w_j) \|x-y\|^2$, where the inequality is due to the strong convexity of $f_j$, property~\eqref{assumpf1}. By the Cauchy-Schwartz inequality, $\|x-y\| \|a-b\|\geq(x-y)^\top (a-b)$,
and, thus, $\|x-y\|\|a-b\| \geq (m_j + w_j) \|x-y\|^2$. Inserting the definitions of $x$ and $y$  yields $\| a - b \| \geq (m_j + w_j ) \left\|  (\nabla F_j)^{-1}(a) -  (\nabla F_j)^{-1}(b) \right\|$.
%, the desired result.}
\end{proof}

We now give our main convergence result.

\begin{theorem}[DJAM converges with probability one]
Let $\Theta^*=(\Theta_1^*,\ldots,\Theta_n^*)$ be the  solution of \eqref{prob:model-prop}. Let $\Theta_i^j(t+1)$, $i=1,\ldots,n$, $j\in\mathcal{N}_i$, be updated via \eqref{eq:iteration} whenever edge $(i,j)$ is chosen at round $t$, and similarly for $\Theta_j^i(t+1)$. 
Then,  for any pair of agents $(i,j)$, $\Theta_i^j(t)\rightarrow \Theta_j^*$ as $t \rightarrow \infty$, with probability one.
\label{theo:as-convergence}
\end{theorem}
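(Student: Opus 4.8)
The plan is to recast the DJAM update as an asynchronous fixed-point iteration and control it with a max-norm Lyapunov function. (Note first that the objective of~\eqref{prob:model-prop} is strongly convex, being the sum of a convex quadratic and the strongly convex $f_i$'s, so $\Theta^*$ exists and is unique.) First I would use the first-order optimality condition of the subproblem in~\eqref{eq:iteration} to rewrite, for any chosen edge $(i,j)$ at round $t$,
\[
\Theta_i^j(t+1) = (\nabla F_j)^{-1}\Big(\sum_{k\in\mathcal{N}_j} W_{jk}\,\Theta_j^k(t)\Big),
\]
with $F_j$ as in Lemma~\ref{theo:shrinking}, and similarly for $\Theta_j^i(t+1)$; all other copies $\Theta_a^b$ remain unchanged. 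Applying the same computation to the optimality condition of~\eqref{prob:model-prop} shows that the configuration in which \emph{every} copy equals the corresponding optimal block, $\Theta_a^b = \Theta_b^*$ for all ordered pairs $(a,b)$ with $b\in\mathcal{N}_a$, is a fixed point of this iteration. So it remains to show that the $2|\mathcal{E}|$ stored copies all converge to that fixed point.

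For the contraction estimate I would combine Lemma~\ref{theo:shrinking} with $\sum_{k\in\mathcal{N}_j} W_{jk} = w_j$: when edge $(i,j)$ is chosen at round $t$,
\[
\|\Theta_i^j(t+1)-\Theta_j^*\| \le (m_j+w_j)^{-1}\Big\|\sum_{k\in\mathcal{N}_j} W_{jk}\big(\Theta_j^k(t)-\Theta_k^*\big)\Big\| \le \rho\max_{k\in\mathcal{N}_j}\|\Theta_j^k(t)-\Theta_k^*\|,
\]
where $\rho \defeq \max_j w_j/(m_j+w_j) < 1$ because every $m_j>0$. Introducing the Lyapunov function $V(t) \defeq \max\big\{\|\Theta_a^b(t)-\Theta_b^*\| : b\in\mathcal{N}_a\big\}$, the worst error over all stored copies, and observing that at each round only the two copies on the chosen edge change, each pushed below $\rho V(t)\le V(t)$, it follows that $V(t)$ is nonincreasing; this already yields the uniform boundedness of the iterates mentioned after the statement.

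The last ingredient is the stopping-time block structure $(T_m)$. Within each block $(T_m, T_{m+1}]$ every edge is chosen at least once, so every copy $\Theta_a^b$ is recomputed at some round of the block, at which point its error drops below $\rho\,V(\cdot)\le\rho\,V(T_m+1)$ by monotonicity of $V$; since a copy is not touched again after its last refresh inside the block, $\|\Theta_a^b(T_{m+1}+1)-\Theta_b^*\|\le\rho\,V(T_m+1)$ for every $(a,b)$, i.e. $V(T_{m+1}+1)\le\rho\,V(T_m+1)$. Hence $V(T_m+1)\le\rho^m V(1)\to 0$. Because $V$ is nonincreasing and $T_m\to\infty$ with probability one, $V(t)\to 0$ with probability one; in particular $\Theta_i^j(t)\to\Theta_j^*$ almost surely for every pair $(i,j)$.

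The main obstacle is precisely the asynchronism: a single round refreshes only two of the $2|\mathcal{E}|$ copies and, moreover, the update~\eqref{eq:iteration} feeds on neighbor copies that may be arbitrarily stale, so there is no per-round contraction to lean on. The max-norm Lyapunov function together with the blocks $T_m$ --- inside which all copies are guaranteed to be refreshed --- is what converts the many ``local'' near-contractions into a genuine geometric decay; the only delicate point is the bookkeeping (off-by-one indices around the definition of $T_m$, and tracking which copies are under control at which round), which is routine.
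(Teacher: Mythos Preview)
Your proposal is correct and follows essentially the same route as the paper's own proof: rewrite the update via the optimality condition as $(\nabla F_j)^{-1}$ applied to a convex combination of neighbor copies, invoke Lemma~\ref{theo:shrinking} to obtain the per-update bound $\|\Theta_i^j(t+1)-\Theta_j^*\|\le\rho\,V(t)$ with $\rho=\max_j w_j/(m_j+w_j)<1$, deduce that the max-error Lyapunov function $V(t)$ is nonincreasing, and then use the stopping-time blocks $(T_m)$ to upgrade monotonicity to geometric decay $V(T_{m+1})\le\rho\,V(T_m)$. The only differences are cosmetic (your $\rho$ is the paper's $\beta$, and you carry a ``$+1$'' in the block indices), and your remark about tracking the last refresh of each copy inside a block is exactly the bookkeeping the paper sweeps under ``inequality~\eqref{eq:Tm_plus_s} holds for all $(i,j)$ when $T_m+s=T_{m+1}$.''
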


\begin{proof}
Let $F_j$, and $w_j$ be defined as in Lemma \ref{theo:shrinking}. Suppose edge $(i,j)$ is chosen at time $t$.  %the update rule \eqref{eq:iteration} can be rewritten in terms of $F_j$ as
It can be verified from \eqref{eq:iteration} and from the first order condition for optimality that
$\Theta_i^j(t~+~1)~=~(\nabla F_j)^{-1} \left(\sum_{k\in\mathcal{N}_j}W_{jk}\Theta_j^k(t)\right)$, where $F_j$ is as defined in Lemma~\ref{theo:shrinking};
similarly, we have  that $\Theta_j^* = (\nabla F_j)^{-1} \left(\sum_{k\in\mathcal{N}_j}W_{jk}\Theta_k^*\right)$ for each component of the solution.

Lemma \ref{theo:shrinking} allows us to find that
\begin{eqnarray}
\lefteqn{\|\Theta_i^j(t+1) - \Theta_j^*\|} \nonumber \\
& \leq & (m_j + w_j)^{-1} \|\sum W_{jk} (\Theta_j^k(t) - \Theta_k^*)\| \nonumber \\
& \leq & (m_j + w_j)^{-1} \sum W_{jk} \|\Theta_j^k(t) - \Theta_k^* \| \nonumber \\
& \leq & (m_j + w_j)^{-1} w_j \max_k \|\Theta_j^k(t) - \Theta_k^* \| \nonumber \\
& \leq & (m_j + w_j)^{-1} w_j V(t), \label{ineq:step}
\end{eqnarray}
where $V(t) := \max_{l,k} \| \Theta_l^k(t) - \Theta_k^* \|$ is the maximum error at round $t$ between the agents' estimates and the solution.

%{\color{blue}$V(t)$ is the maximum error at round $t$ between the agents' estimates and the solution,}
%\begin{equation}
%\color{blue} V(t) := \max_{l,k} \| \Theta_l^k(t) - \Theta_k^* \|. \label{defV}
%\end{equation}

If edge $(i,j)$ is chosen at round $t$, we have, by the derivation above, that \begin{equation} \|\Theta_i^j(t+1) - \Theta_j^*\| \leq  V(t). \label{keyineq} \end{equation} If that edge is not chosen, then $\Theta_i^j(t+1) = \Theta_i^j(t)$ and, by definition of $V(t)$, $\|\Theta_i^j(t+1) - \Theta_j^*\| \leq V(t)$. We conclude that~\eqref{keyineq} holds for any pair $(i,j)$ and, so, $V(t+1)~\leq~V(t)$. Since $V(t) \geq 0$, the limit (which is a random variable) $V \defeq \lim_{t\rightarrow\infty} V(t)$
is thus always well defined. The goal of the proof is to show that $V = 0$ with probability one.

%Consider now a sequence of randomly-chosen edges in which all edges of the graph appear infinitely often. Note that, by Lemma~ \ref{theo:ij-io}, such sequences exist with probability one; this means that the following reasoning holds with probability one. Since, by definition of $T_{m+1}$, all edges $(i,j)$ were selected at least once between $T_m$ and $T_{m+1}$, we know that

Recall that $T_{m+1}$ denotes the first round after $T_m$ by which all edges were selected at least once. It should be clear to the reader that the remainder of the proof holds almost surely, since $T_m$ is finite for all $m$ with probability one. Suppose edge $(i,j)$ was selected at round $T_m + s$. Then
\begin{equation}
\|\Theta_i^j(T_m+s) - \Theta_j^*\| \leq (m_j + w_j)^{-1} w_j V(T_m),
\label{eq:Tm_plus_s}
\end{equation}
cf. inequality \eqref{ineq:step}. Since, by definition of $T_m$ and $T_{m+1}$, all edges $(i,j)$ in the graph were selected at least once between $T_m$ and $T_{m+1}$, inequality \eqref{eq:Tm_plus_s} holds for all $(i,j)\in\mathcal{E}$ when $T_m + s = T_{m+1}$. In other words,
\[
\|\Theta_i^j(T_{m+1}) - \Theta_j^*\| \leq  (m_j + w_j)^{-1} w_j  V(T_m)
\]
for all edges $(i,j)\in\mathcal{E}$. It follows, by the definition of $V(T_{m+1})$, that
\begin{equation} V(T_{m+1})\leq \beta V(T_m) \label{ineqV}, \end{equation} 
where $\beta~\defeq~\max_i\ \{ (m_i + w_i)^{-1} w_i\} \in [0,1)$.

Let us take the limit $m\rightarrow\infty$ in~\eqref{ineqV}. We know that $T_m\rightarrow\infty$, and it follows from~\eqref{ineqV} that $V\leq \beta V$. Thus, owing to  $0 \leq \beta < 1$, we must have $V=0$, which implies $\Theta_i^j(t)\rightarrow \Theta_j^*$ as $t \rightarrow \infty$, for any $(i,j)$.
\end{proof}

\noindent{Inequality~\eqref{ineqV} implies that $V(t) \leq V(0)$. The iterations $\left( \Theta_k^l(t) \right)_{t \geq 0}$ are, thus, uniformly bounded. We conclude (by the dominated convergence theorem) that our result implies the convergence in expectation result in~\cite{vanhaese2016personalized}.}

%\noindent\textbf{Convergence in expectation.} Note that~\eqref{ineqV} implies $V(t) \leq V(0)$ for all $t$. This means that the iterations $\left( \Theta_k^l(t) \right)_{t \geq 0}$ are uniformly bounded. Since, $\Theta_k^l(t) \rightarrow \Theta_k^*$ with probability one  as $t \rightarrow \infty$, we conclude (by the dominated convergence theorem) that the convergence also takes place in expectation. We thus obtain the convergence mode of~\cite{vanhaese2016personalized}, as a particular case.

%Our result is strictly more general than \cite[Theorem~1]{vanhaese2016personalized}. To see this, notice that the quadratic form $\|\theta_i - \bar{\theta}_i \|^2$ is strongly convex; we thus have a.s. convergence in the quadratic-loss scenario. Finally, the dominated convergence theorem implies convergence in expected value.
% !TEX root = main.tex
%--------------------------------------%
%--------------------------------------%
%--------------------------------------%
\section{Field estimation example} \label{sec:exp}
%\vspace*{2ex}
\noindent\textbf{Setup.} Following~\cite{eksin2012heuristic}, we consider a field estimation setup that leads to a problem of the form~\eqref{prob:model-prop}. The $n$ agents are spread in a region and wish to profile a certain quantity, say, temperature, over the region: agent~$i$ cares only about the value of the quantity at its location, $\theta_i$. Assume that the true values of the temperatures, $\theta = (\theta_1,\ldots, \theta_n)$, are drawn from a prior distribution: a normal distribution with known mean and covariance $\Sigma$; as in~\cite{eksin2012heuristic}, we \emph{assume} that the off-diagonal elements of $\Sigma^{-1}$ match the sparsity of the network, that is, $(\Sigma^{-1})_{ij}>0$ if and only if $(i,j)\in\mathcal{E}$. %{\color{blue}This means that temperatures of neighbouring agents are similar to one another and, therefore, that the underlying temperature profile is smooth across the network.}
Agent~$i$ measures $y_i = \theta_i + \nu_i$, where $\nu_i$ models identically distributed sensor noise (for simplicity), which is independent across agents. 

\vspace*{2ex}

\noindent\textbf{MAP estimation.} A maximum 
a posteriori (MAP) approach seeks the $\theta = ( \theta_1, \ldots, \theta_n )$ that maximizes $
\sum_{i=1}^n \log\Prob(y_i \mid \theta_i) + \log\Prob(\theta_1,\ldots,\theta_n)$; or, equivalently, the $\theta$ that minimizes
\begin{equation}
\frac{1}{2}\left(
\sum_{i\sim j} \sigma_{ij}(\theta_i - \theta_j)^2 +
\sum_{i=1}^n \sigma_{ii}\theta_i^2
\right)
+
\sum_{i=1}^n \phi(y_i-\theta_i), \label{lastprob}
\end{equation}
where $\sigma_{ij} := (\Sigma^{-1})_{ij}$ and $\phi$ depends on the distribution of the noise $\nu_i$. We let $\phi$ be a Huber penalty function to handle outliers~\cite{huber1981huber}.
Finally, defining the personal loss functions as $f_i(\theta_i) \defeq \phi(y_i-\theta_i) + \frac{1}{2} \sigma_{ii}\theta_i^2$ puts~\eqref{lastprob} in the form~\eqref{prob:model-prop}. Also, assumptions~\eqref{assumpf1} and~\eqref{assumpf2} hold.

\vspace*{2ex}

\noindent\textbf{Results: comparing DJAM with CL-ADMM.}
Since the algorithm MPA from~\cite{vanhaese2016personalized} applies only to quadratic functions, we use the ADMM-based algorithm CL-ADMM from~\cite{vanhaese2016personalized} to compare with DJAM. Note that both CL-ADMM and DJAM converge to the solution with probability one. The algorithm CL-ADMM, however, being based on ADMM, has a parameter to tune---the parameter in the quadratic penalization part of the augmented Lagrangian function. This parameter, which we refer to as $\rho$, is known to affect noticeably the convergence speed of ADMM.

\begin{figure}[t] % changing this to [t] buys us a lot of room!!
\centering \includegraphics[width=0.5\textwidth]{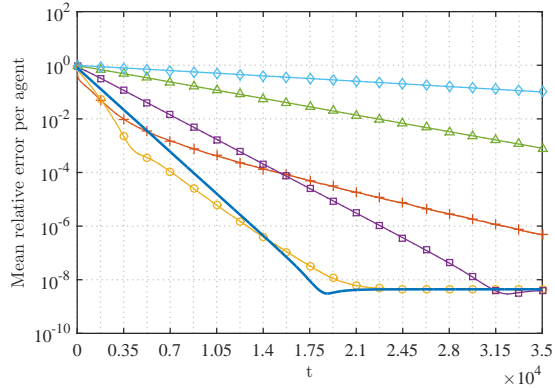}
\caption{Mean relative error $\left( \left\| \Theta_i^i(t) - \Theta_i^* \right\| / \left\| \Theta_i^* \right\| \right)$ per agent on the smooth field estimation problem for an instance with $n = 30$ agents; the mean was obtained by averaging over 100 Monte Carlo trials. Our method, DJAM, corresponds to the blue line with no markers. The other lines correspond to CL-ADMM with $\rho$ equal to 0.1~(red~$+$), 0.316~(yellow~$\ocircle$), 1.0~(violet~$\square$), 3.16~(green~$\vartriangle$), and 10~(cyan~$\lozenge$). All methods stop improving after reaching a relative error slightly above $10^{-9}$, which we believe is due to rounding errors.}
\label{fig:huber}
\end{figure}

The results for a field estimation instance are shown in Figure~\ref{fig:huber}. It shows, across rounds~$t$,  the relative error between an agent's private model $\Theta_i^i(t) $ and the solution component $\Theta_i^*$: $\left\| \Theta_i^i(t) - \Theta_i^* \right\| / \left\| \Theta_i^* \right\|$. The relative error was averaged over agents and over 100 Monte Carlo trials where, in each Monte Carlo run, we choose a different set of edges along time.

Figure~\ref{fig:huber} confirms that the speed of convergence of CL-ADMM varies with the parameter $\rho$ noticeably. 
In fact, we verified in other simulations (omitted due to lack of space) that the optimal $\rho$ varied significantly with the number of agents, with the range of values for $\sigma_{ij}$, and with the noise distribution---we found the optimal~$\rho$ for those simulations by careful hand-tuning. In contrast, DJAM is on par with the best $\rho$ in Figure~\ref{fig:huber}, and needs no parameter tuning. 

%--------------------------------------%
%--------------------------------------%
%--------------------------------------%
%\section{Conclusions} \label{sec:final}
%Typical distributed optimization problems require agents to reach a common (or consensus) solution. However, there are certain applications in which it is preferable that agents learn personalized solutions. We presented DJAM, a distributed Jacobi assynchronous (single-neighbour) method for tackling such problems. We proved that this method converges with probability one to the centralized solution when the personal loss functions are strongly convex and have Lipschitz gradient. We then compared our method with a distributed, asynchronous version of ADMM, and verified that our method is comparable with ADMM in terms of performance, with the advantage of being parameter-free, while the performance of ADMM strongly depends on its penalty parameter.

\pagebreak
\bibliographystyle{ieeetr}
\bibliography{./references/model_prop}
\end{document}